%
\documentclass[runningheads]{llncs}
\usepackage[T1]{fontenc}
%
\usepackage{hyperref}
\usepackage{amsfonts}
\usepackage{enumitem}
\usepackage[english]{babel}
\usepackage{multirow,boldline}
\usepackage{textcomp}
\usepackage[dvipsnames,table]{xcolor}
\usepackage{float}
\usepackage{dsfont}
\usepackage{placeins}
\usepackage{stfloats}
\usepackage{derivative}

\usepackage{amsmath}
\usepackage{amssymb}
\usepackage{mathtools}

\usepackage{todonotes}

%
%
\begin{document}
\title{Dice Semimetric Losses:\\Optimizing the Dice Score with Soft Labels}
%
%
\author{Zifu Wang\inst{1} \and 
Teodora Popordanoska\inst{1} \and 
Jeroen Bertels\inst{1} \and 
Robin Lemmens\inst{2,3} \and 
Matthew B. Blaschko\inst{1}} 

\authorrunning{Wang et al.}
%
\institute{ESAT-PSI, KU Leuven, Leuven, Belgium \and
Department of Neurosciences, KU Leuven, Leuven, Belgium \and
Department of Neurology, UZ Leuven, Leuven, Belgium \\
\email{firstname.lastname@kuleuven.be}}

\maketitle              

\begin{abstract}
The soft Dice loss (SDL) has taken a pivotal role in numerous automated segmentation pipelines in the medical imaging community. Over the last years, some reasons behind its superior functioning have been uncovered and further optimizations have been explored. However, there is currently no implementation that supports its direct utilization in scenarios involving soft labels. Hence, a synergy between the use of SDL and research leveraging the use of soft labels, also in the context of model calibration, is still missing. In this work, we introduce Dice semimetric losses (DMLs), which (i) are by design identical to SDL in a standard setting with hard labels, but (ii) can be employed in settings with soft labels. Our experiments on the public QUBIQ, LiTS and KiTS benchmarks confirm the potential synergy of DMLs with soft labels (e.g.\ averaging, label smoothing, and knowledge distillation) over hard labels (e.g.\ majority voting and random selection). As a result, we obtain superior Dice scores and model calibration, which supports the wider adoption of DMLs in practice. The code is available at \href{https://github.com/zifuwanggg/JDTLosses}{https://github.com/zifuwanggg/JDTLosses}.
\keywords{Dice Score \and Dice Loss \and Soft Labels \and Model Calibration}
\end{abstract}

\section{Introduction}\label{sec:introduction}
Image segmentation is a fundamental task in medical image analysis. One of the key design choices in many segmentation pipelines that are based on neural networks lies in the selection of the loss function. In fact, the choice of loss function goes hand in hand with the metrics chosen to assess the quality of the predicted segmentation~\cite{NatureVapnik1995}. The intersection-over-union (IoU) and the Dice score are commonly used metrics because they reflect both size and localization agreement, and they are more in line with perceptual quality compared to, e.g., pixel-wise accuracy~\cite{OptimizationTMI2020,MetricsMaier-HeinarXiv2023}. Consequently, directly optimizing the IoU or the Dice score using differentiable surrogates as (a part of) the loss function has become prevalent in semantic segmentation~\cite{Lovasz-softmaxLossBermanCVPR2018,nnU-NetIsenseeNatureMethods2021,OptimizationTMI2020,SAMKirillovICCV2023,JMLsWangNeurIPS2023}. In medical imaging in particular, the Dice score and the soft Dice loss (SDL)~\cite{V-NetMilletari3DV2016,SoftDiceLossSudreMICCAIWorkshop2017} have become the standard practice, and some reasons behind its superior functioning have been uncovered and further optimizations have been explored~\cite{OptimizationTMI2020,TheoreticalBertelsMIA2021,TheDiceLossTilborghsMICCAI2022}.

Another mechanism to further improve the predicted segmentation that has gained significant interest in recent years, is the use of soft labels during training. Soft labels can be the result of data augmentation techniques such as label smoothing (LS)~\cite{InceptionV2V3SzegedyCVPR2016,SVLSIslamIPMI2021} and are integral to regularization methods such as knowledge distillation (KD)~\cite{KDHintonNeurIPSWorkshop2015,EMKDQinTMI2021}. Their role is to provide additional regularization so as to make the model less prone to overfitting \cite{InceptionV2V3SzegedyCVPR2016,KDHintonNeurIPSWorkshop2015} and to combat overconfidence~\cite{CalibrationGuoICML2017}, e.g., providing superior model calibration~\cite{WhenMullerNeurIPS2019}. In medical imaging, soft labels emerge not only from LS or KD, but are also present inherently due to considerable intra- and inter-rater variability. For example, multiple annotators often disagree on organ and lesion boundaries, and one can average their annotations to obtain soft label maps~\cite{SoftSegGrosMIA2021,MRNetJiCVPR2021,UsingSilvaMICCAIWorkshop2021,LabelLemayMELBA2023}.

This work investigates how the medical imaging community can combine the use of SDL with soft labels to reach a state of synergy. While the original SDL surrogate was posed as a relaxed form of the Dice score, naively inputting soft labels to SDL is possible (e.g.\ in open-source segmentation libraries~\cite{SMPIakubovskii2019,MMSegmentation2020,nnU-NetIsenseeNatureMethods2021,MedISegZhangarXiv2022}), but it tends to push predictions towards 0-1 outputs rather than make them resemble the soft labels~\cite{TheoreticalBertelsMIA2021,NoisyNordstromarXiv2023,JMLsWangNeurIPS2023}. Consequently, the use of SDL when dealing with soft labels might not align with a user's expectations, with potential adverse effects on the Dice score, model calibration and volume estimation~\cite{TheoreticalBertelsMIA2021}. 

Motivated by this observation, we first (in Sect.~\ref{sec:methods}) propose two probabilistic extensions of SDL, namely, Dice semimetric losses (DMLs). These losses satisfy the conditions of a semimetric and are fully compatible with soft labels. In a standard setting with hard labels, DMLs are identical to SDL and can safely replace SDL in existing implementations. Secondly (in Sect.~\ref{sec:experiments}), we perform extensive experiments on the public QUBIQ, LiTS and KiTS benchmarks to empirically confirm the potential synergy of DMLs with soft labels (e.g.\ averaging, LS, KD) over hard labels (e.g.\ majority voting, random selection).

\section{Methods}\label{sec:methods}
We adopt the notation from \cite{JMLsWangNeurIPS2023}. In particular, we denote the predicted segmentation as $\dot x \in \{1,...,C\}^p$ and the ground-truth segmentation as $\dot y \in \{1,...,C\}^p$, where $C$ is the number of classes and $p$ the number of pixels. For a class $c$, we define the set of predictions as $x^c=\{\dot x=c\}$, the set of ground-truth as $y^c=\{\dot y=c\}$, the union as $u^c=x^c\cup y^c$, the intersection as $v^c=x^c\cap y^c$, the symmetric difference (i.e., the set of mispredictions) as $m^c=(x^c \setminus y^c)\cup (y^c\setminus x^c)$, the Jaccard index as $\text{IoU}^c = \frac{|v^c|}{|u^c|}$, and the Dice score as $\text{Dice}^c=\frac{2\text{IoU}^c}{1+\text{IoU}^c}=\frac{2|v^c|}{|x^c|+|y^c|}$. In what follows, we will represent sets as binary vectors $x^c,y^c,u^c,v^c,m^c \in \{0,1\}^p$ and denote $|x^c|=\sum_{i=1}^p x^c_i$ the cardinality of the relevant set. Moreover, when the context is clear, we will drop the superscript $c$.

\subsection{Existing Extensions}
If we want to optimize the Dice score, hence, minimize the Dice loss $\Delta_{\text{Dice}} = 1-\text{Dice}$ in a continuous setting, we need to extend $\Delta_{\text{Dice}}$ with $\overline{\Delta}_{\text{Dice}}$ such that it can take any predicted segmentation $\tilde{x}\in[0,1]^p$ as input. Hereinafter, when there is no ambiguity, we will use $x$ and $\tilde{x}$ interchangeably. 

The soft Dice loss (SDL) \cite{SoftDiceLossSudreMICCAIWorkshop2017} extends $\Delta_{\text{Dice}}$ by realizing that when $x,y \in \{0,1\}^p$, $|v|=\langle x, y\rangle$, $|x|=\|x\|_1$ and $|y|=\|y\|_1$. Therefore, SDL replaces the set notation with vector functions:
\begin{equation}
   \overline{\Delta}_{\text{SDL,$L^1$}}: x\in[0,1]^p, y\in\{0,1\}^p \mapsto 1 - \frac{2\langle x, y\rangle}{\|x\|_1+\|y\|_1}.
\end{equation}
The soft Jaccard loss (SJL) \cite{OptimalNowozinCVPR2014,SoftJaccardRahmanISVC2016} can be defined in a similar way:
\begin{equation}
   \overline{\Delta}_{\text{SJL,$L^1$}}: x\in[0,1]^p, y\in\{0,1\}^p \mapsto 1 - \frac{\langle x, y\rangle}{\|x\|_1+\|y\|_1-\langle x, y\rangle}.
\end{equation}
The $L^1$ norm can be replaced with the squared $L^2$ norm \cite{OptimizationTMI2020,V-NetMilletari3DV2016}:
\begin{align}
   \overline{\Delta}_{\text{SDL},L^2}&: x\in[0,1]^p, y\in\{0,1\}^p \mapsto  1 - \frac{2\langle x, y\rangle}{\|x\|_2+\|y\|_2} \\
   \overline{\Delta}_{\text{SJL},L^2}&: x\in[0,1]^p, y\in\{0,1\}^p \mapsto  1 - \frac{\langle x, y\rangle}{\|x\|_2^2+\|y\|_2^2-\langle x, y\rangle}.
\end{align}

A major limitation of loss functions based on $L^1$ relaxations, including SDL, SJL, the soft Tversky loss \cite{SoftTverskyLossSalehiMICCAIWorkshop2017} and the focal Tversky loss \cite{FocalTverskyLossAbrahamISBI2019}, as well as those relying on the Lovasz extension, such as the Lovasz hinge loss \cite{LovaszHingeYuTPAMI2018}, the Lovasz-Softmax loss \cite{Lovasz-softmaxLossBermanCVPR2018} and the PixIoU loss \cite{PixIoUYuICML2021}, is that they cannot handle soft labels \cite{JMLsWangNeurIPS2023}. That is, when $y$ is also in $[0,1]^p$. In particular, both SDL and SJL do not reach their minimum at $x=y$, but instead they drive $x$ towards the vertices $\{0,1\}^p$ \cite{TheoreticalBertelsMIA2021,JMLsWangNeurIPS2023,NoisyNordstromarXiv2023}. Take for example $y=0.5$; it is straightforward to verify that SDL achieves its minimum at $x=1$, which is clearly erroneous. Loss functions that utilize $L^2$ relaxations \cite{V-NetMilletari3DV2016,OptimizationTMI2020} do not exhibit this problem \cite{JMLsWangNeurIPS2023}, but they are less commonly employed in practice and are shown to be inferior to their $L^1$ counterparts \cite{OptimizationTMI2020,JMLsWangNeurIPS2023}. 

To address this, Wang et al. \cite{JMLsWangNeurIPS2023} proposed two variants of SJL termed as Jaccard Metric Losses (JMLs). These two variants, $\overline{\Delta}_{\text{JML1}}$ and $\overline{\Delta}_{\text{JML2}}: [0,1]^p\times[0,1]^p \rightarrow [0,1]$ are defined as
\begin{align}\label{eq:jml}
    \overline{\Delta}_{\text{JML1}} = 1 - \frac{\|x\|_1+\|y\|_1-\|x-y\|_1}{\|x\|_1+\|y\|_1+\|x-y\|_1}, \quad
    \overline{\Delta}_{\text{JML2}} = 1 - \frac{\langle x, y\rangle}{\langle x, y\rangle+\|x-y\|_1}.
\end{align}

Here, $\|x-y\|_1$ represents the symmetric difference. JMLs are shown to be a metric on $[0,1]^p$, according to the definition below.
\begin{definition}[Metric \cite{EncyclopediaDeza2009}]
A mapping $f: [0,1]^p \times [0,1]^p \rightarrow \mathbb{R}$ is called a metric if it satisfies the following conditions for all $a,b,c \in [0,1]^p$:
\begin{enumerate}[label=(\roman*)]
    \item (Reflexivity). $f(a, a) = 0$.
    \item (Positivity). $a\neq b \implies f(a,b)>0$.
    \item (Symmetry). $f(a,b) = f(b,a)$.
    \item (Triangle inequality). $f(a,c) \leq f(a,b) + f(b,c)$.
\end{enumerate}
\end{definition}
Note that reflexivity and positivity jointly imply $x=y \Leftrightarrow f(x,y) = 0$, hence, a loss function that satisfies these conditions will be compatible with soft labels. 

\subsection{Dice Semimetric Losses}
We focus here on the Dice loss. For the derivation of the Tversky loss and the focal Tversky loss, please refer to Appendix \ref{app:tversky}.

Since $\text{Dice}=\frac{2\text{IoU}}{1+\text{IoU}}\Rightarrow 1-\text{Dice}=\frac{1-\text{IoU}}{2-(1-\text{IoU})}$, we have $\overline{\Delta}_{\text{Dice}}=\frac{\overline{\Delta}_{\text{IoU}}}{2- \overline{\Delta}_{\text{IoU}}}$. There exist several alternatives to define $\overline{\Delta}_{\text{IoU}}$, but not all of them are feasible, e.g., SJL. Generally, it is easy to verify the following proposition.
\begin{proposition}\label{thm:DiceIoUcompatibility}
$\overline{\Delta}_{\text{Dice}}$ satisfies reflexivity and positivity iff $\overline{\Delta}_{\text{IoU}}$ does.
\end{proposition}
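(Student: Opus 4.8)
The plan is to reduce the biconditional to an elementary property of the scalar map linking the two losses. Recall from the derivation immediately preceding the statement that, evaluated at any fixed pair of inputs, $\overline{\Delta}_{\text{Dice}} = g(\overline{\Delta}_{\text{IoU}})$ where $g(t) = \frac{t}{2-t}$. Since $\overline{\Delta}_{\text{IoU}}$ is designed to take values in $[0,1]$, I would first record that $2 - t \ge 1 > 0$ on this range, so $g$ is well defined and the identity holds pointwise for every argument pair. This observation is what lets me pass freely between the two loss functions.

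Next I would establish the only real content, namely that $g$ is a strictly increasing bijection of $[0,1]$ onto itself that fixes the origin. Computing $g'(t) = \frac{2}{(2-t)^2} > 0$ on $[0,1]$ gives strict monotonicity, and $g(0) = 0$. Together these yield the two scalar equivalences that drive everything: $g(t) = 0 \Leftrightarrow t = 0$ and $g(t) > 0 \Leftrightarrow t > 0$ for all $t \in [0,1]$.

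With this in hand the proposition follows by applying the equivalences pointwise. For reflexivity, $\overline{\Delta}_{\text{Dice}}(a,a) = g(\overline{\Delta}_{\text{IoU}}(a,a))$ vanishes exactly when $\overline{\Delta}_{\text{IoU}}(a,a)$ does, so one loss is reflexive iff the other is. For positivity, fix $a \neq b$; then $\overline{\Delta}_{\text{Dice}}(a,b) > 0 \Leftrightarrow \overline{\Delta}_{\text{IoU}}(a,b) > 0$, and since this biconditional holds for every such pair, the universally quantified positivity condition transfers in both directions. Assembling the two parts gives the claimed equivalence.

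I do not anticipate a genuine obstacle: the statement is essentially the remark that composing with a zero- and positivity-preserving monotone scalar map cannot alter reflexivity or positivity. The only point requiring care is confirming that the denominator $2 - \overline{\Delta}_{\text{IoU}}$ never vanishes, which is precisely why I would pin down the range $[0,1]$ of $\overline{\Delta}_{\text{IoU}}$ at the outset; were $\overline{\Delta}_{\text{IoU}}$ allowed to reach $t = 2$, the map $g$ would blow up and the clean equivalence would need to be revisited.
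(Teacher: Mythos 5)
Your proof is correct and follows exactly the route the paper intends: it leans on the identity $\overline{\Delta}_{\text{Dice}}=\overline{\Delta}_{\text{IoU}}/(2-\overline{\Delta}_{\text{IoU}})$ derived immediately before the proposition and observes that $t\mapsto t/(2-t)$ is a strictly increasing bijection of $[0,1]$ fixing $0$, so vanishing and strict positivity transfer pointwise in both directions. The paper leaves this as "easy to verify," and your write-up is a complete, careful version of that same argument.
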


Among the definitions of $\overline{\Delta}_{\text{IoU}}$, Wang et al. \cite{JMLsWangNeurIPS2023} found only two candidates as defined in Eq. (\ref{eq:jml}) satisfy reflexivity and positivity. Following Proposition~\ref{thm:DiceIoUcompatibility}, we transform these two IoU losses and define Dice semimetric losses (DMLs) $\overline{\Delta}_{\text{DML1}}, \overline{\Delta}_{\text{DML2}}: [0,1]^p\times[0,1]^p \rightarrow [0,1]$ as  
\begin{align}
    \overline{\Delta}_{\text{DML1}} = 1-\frac{\|x\|_1+\|y\|_1-\|x-y\|_1}{\|x\|_1+\|y\|_1}, \quad
    \overline{\Delta}_{\text{DML2}} = 1-\frac{2\langle x, y\rangle}{2\langle x, y\rangle+\|x-y\|_1}.
\end{align}

$\Delta_{\text{Dice}}$ that is defined over integers does not satisfy the triangle inequality \cite{RelaxedGrageraTCS2018}, which is shown to be helpful in KD \cite{JMLsWangNeurIPS2023}. Nonetheless, we can consider a weaker form of the triangle inequality:
\begin{equation}
    f(a,c) \leq \rho (f(a,b) + f(b,c)).
\end{equation}
Functions that satisfy the relaxed triangle inequality for some fixed scalar $\rho$ and conditions (i)-(iii) of a metric are called semimetrics. $\Delta_{\text{Dice}}$ is a semimetric on $\{0,1\}^p$ \cite{RelaxedGrageraTCS2018}. $\overline{\Delta}_{\text{DML1}}$ and $\overline{\Delta}_{\text{DML2}}$, which extend $\Delta_{\text{Dice}}$ to $[0,1]^p$, remain semimetrics in the continuous space as the following theorem shows.
\begin{theorem} \label{thm:rhodice}
$\overline{\Delta}_{\text{DML1}}$ and $\overline{\Delta}_{\text{DML2}}$ are semimetrics on $[0,1]^p$.
\end{theorem}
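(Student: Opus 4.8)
The plan is to exploit the exact functional relationship between each DML and its parent JML that the paper has already recorded. Writing $g(t) = \frac{t}{2-t}$, the identity $\overline{\Delta}_{\text{Dice}} = \frac{\overline{\Delta}_{\text{IoU}}}{2-\overline{\Delta}_{\text{IoU}}}$ yields
\begin{equation*}
    \overline{\Delta}_{\text{DML},i} = g\big(\overline{\Delta}_{\text{JML},i}\big), \quad i \in \{1,2\},
\end{equation*}
and a direct substitution into Eq.~(\ref{eq:jml}) confirms this for both pairs. On $[0,1]$ the map $g$ is a strictly increasing bijection with $g(0)=0$ and $g(1)=1$, since $g'(t)=2/(2-t)^2>0$. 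Because each $\overline{\Delta}_{\text{JML},i}$ is a metric on $[0,1]^p$ by \cite{JMLsWangarXiv2023}, conditions (i) and (ii) transfer to $\overline{\Delta}_{\text{DML},i}$ at once, either via Proposition~\ref{thm:DiceIoUcompatibility} or because $g$ is strictly increasing and vanishes only at $0$. Symmetry (iii) is read directly off the formulas, as $\|x-y\|_1$ and the denominators are symmetric in $x,y$. Thus only the relaxed triangle inequality (iv) requires real work.

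For (iv), the idea is to push the genuine triangle inequality of the parent metric through $g$. Fix $a,b,c\in[0,1]^p$ and set $u=\overline{\Delta}_{\text{JML},i}(a,b)$ and $w=\overline{\Delta}_{\text{JML},i}(b,c)$, so $u,w\in[0,1]$. Since $\overline{\Delta}_{\text{JML},i}$ is a metric we have $\overline{\Delta}_{\text{JML},i}(a,c)\le u+w$, and as $\overline{\Delta}_{\text{JML},i}(a,c)\le 1$ with $g$ increasing,
\begin{equation*}
    \overline{\Delta}_{\text{DML},i}(a,c) = g\big(\overline{\Delta}_{\text{JML},i}(a,c)\big) \le g\big(\min(1,\,u+w)\big).
\end{equation*}
It therefore suffices to establish the scalar inequality $g(\min(1,u+w)) \le \rho\,(g(u)+g(w))$ for all $u,w\in[0,1]$ and some fixed $\rho$; I would argue that $\rho=\tfrac{3}{2}$ works and is optimal, which makes the result uniform in $i$.

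I would prove this scalar inequality by a convexity reduction, splitting on whether $u+w\le 1$. Since $g$ is convex on $[0,1]$, for a fixed sum $S=u+w$ the quantity $g(u)+g(w)$ is minimized at $u=w=S/2$, so $\frac{g(S)}{g(u)+g(w)}\le\frac{g(S)}{2g(S/2)}=\frac{4-S}{2(2-S)}$, which is increasing in $S$ and attains $\tfrac{3}{2}$ at $S=1$; this settles $u+w\le 1$. In the complementary regime $u+w>1$ the left-hand side is at most $g(1)=1$, while convexity on the boundary gives $g(u)+g(w)\ge 2g(\tfrac12)=\tfrac{2}{3}$ on $\{u+w\ge 1\}$, so $\tfrac{3}{2}(g(u)+g(w))\ge 1$ dominates. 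The main obstacle is exactly this scalar step: because $g$ is only applied to arguments in $[0,1]$ one must treat the two regimes separately, and the convexity/monotonicity bookkeeping is tight precisely at $u=w=\tfrac12$, which is what pins $\rho$ to $\tfrac{3}{2}$ rather than a smaller constant.
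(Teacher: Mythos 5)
Your proof is correct, and it takes a genuinely different route from the paper's. Both arguments start from the same observation — that $\overline{\Delta}_{\text{DML},i} = g(\overline{\Delta}_{\text{JML},i})$ with $g(t)=\frac{t}{2-t}$ increasing, so that reflexivity, positivity and symmetry transfer immediately and only the relaxed triangle inequality needs work — but they diverge there. The paper substitutes the JML triangle inequality into the rearranged two-sided condition and checks the signs of both resulting terms, which yields the sufficient constant $\rho=\frac{1+\sqrt{5}}{2}\approx 1.62$; it then exhibits the example $a=[0,1]$, $b=[1,1]$, $c=[1,0]$ showing $\rho\geq\frac{3}{2}$ is necessary, leaving a gap between the two constants. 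You instead reduce everything to the scalar inequality $g(\min(1,u+w))\leq\rho\,(g(u)+g(w))$ and exploit the convexity of $g$ via Jensen (for a fixed sum $S=u+w$ the right-hand side is minimized at $u=w=S/2$, and $\frac{g(S)}{2g(S/2)}=\frac{4-S}{2(2-S)}$ is increasing on $[0,1]$ with value $\frac{3}{2}$ at $S=1$), handling the regime $u+w>1$ by the trivial bound $g(\min(1,u+w))\leq 1$ together with $g(u)+g(w)\geq 2g(\tfrac12)=\tfrac23$. This closes the paper's gap: you prove the inequality with the \emph{optimal} constant $\rho=\frac{3}{2}$, matched by the same extremal example, and the argument is uniform over both losses since it only uses that each $\overline{\Delta}_{\text{JML},i}$ is a $[0,1]$-valued metric. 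The one step worth spelling out in a final write-up is the claim that the minimum of $g(u)+g(w)$ over $\{u+w\geq 1,\ u,w\in[0,1]\}$ is attained on the line $u+w=1$ (which follows because $g$ is increasing), but this is a one-line addition, not a gap.
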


The proof can be found in Appendix \ref{app:thm_semi}. Moreover, DMLs have properties that are similar to JMLs and they are presented as follows.
\begin{theorem} \label{thm:eqneq}
$\forall x\in[0,1]^p, y\in \{0,1\}^p$ and $x\in \{0,1\}^p, y\in [0,1]^p$, $\overline{\Delta}_{\text{DML1}}=\overline{\Delta}_{\text{DML2}}=\overline{\Delta}_{\text{SDL,$L^1$}}$. $\forall x, y\in[0,1]^p, \overline{\Delta}_{\text{DML1,$L^2$}}=\overline{\Delta}_{\text{DML2,$L^2$}}=\overline{\Delta}_{\text{SDL,$L^2$}}$. $\exists x, y \in [0,1]^p, \overline{\Delta}_{\text{DML1}}\neq\overline{\Delta}_{\text{DML2}}\neq\overline{\Delta}_{\text{SDL,$L^1$}}$.
\end{theorem}
\begin{theorem} \label{thm:smaller}
$\forall x, y\in[0,1]^p$, $\overline{\Delta}_{\text{DML1}}\leq \overline{\Delta}_{\text{DML2}}$.
\end{theorem}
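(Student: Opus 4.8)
The plan is to rewrite both losses in a common form that exposes their shared structure, and then reduce the inequality to a separable, pixelwise bound. Since $x,y\in[0,1]^p$ have nonnegative entries, $\|x+y\|_1=\sum_i(x_i+y_i)$, and a direct algebraic simplification of the two definitions gives
\[
\overline{\Delta}_{\text{DML,1}}=\frac{\|x-y\|_1}{\|x+y\|_1},\qquad \overline{\Delta}_{\text{DML,2}}=\frac{\|x-y\|_1}{2\|xy\|_1+\|x-y\|_1}.
\]
Both fractions now share the nonnegative numerator $\|x-y\|_1$. I would first dispose of the case $x=y$: by reflexivity (Theorem~\ref{thm:rhodice}) both losses equal $0$ there, so equality holds, and I may assume $x\neq y$, whence $\|x-y\|_1>0$ and both denominators are strictly positive.

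Given $\|x-y\|_1>0$, the claimed inequality $\overline{\Delta}_{\text{DML,1}}\le\overline{\Delta}_{\text{DML,2}}$ is equivalent, after cancelling the common numerator and inverting the (positive) fractions, to the reverse comparison of denominators,
\[
2\|xy\|_1+\|x-y\|_1 \;\le\; \|x+y\|_1.
\]
The crucial observation is that this is a sum over the $p$ coordinates of identical scalar inequalities, so it suffices to prove the pointwise bound $2ab+|a-b|\le a+b$ for all $a,b\in[0,1]$. By symmetry I may assume $a\ge b$, which reduces the claim to $2ab\le 2b$, i.e.\ $b(a-1)\le 0$; this is immediate since $b\ge 0$ and $a\le 1$. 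Summing over all coordinates yields the denominator inequality, and hence the theorem.

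The argument is elementary once the losses are put in the form $\|x-y\|_1/(\cdot)$, and the only genuinely delicate point -- the main obstacle -- is recognizing that the comparison collapses to the separable pointwise inequality and that it is exactly the box constraint $a,b\in[0,1]$ that makes it hold (the step $b(a-1)\le 0$ fails as soon as $a>1$). As a sanity check, on binary inputs every coordinate attains equality in $2ab+|a-b|\le a+b$, so the two denominators coincide and $\overline{\Delta}_{\text{DML,1}}=\overline{\Delta}_{\text{DML,2}}$, consistent with Theorem~\ref{thm:eqneq}.
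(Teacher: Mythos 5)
Your proof is correct. The paper itself does not write out an argument for this theorem---it only remarks that ``the proofs are similar to those given in'' the Jaccard metric losses paper---so the intended route is presumably to invoke the already-established ordering $\overline{\Delta}_{\text{JML,1}}\leq\overline{\Delta}_{\text{JML,2}}$ together with the identity $\overline{\Delta}_{\text{DML}}=\overline{\Delta}_{\text{JML}}/(2-\overline{\Delta}_{\text{JML}})$ and the fact that $t\mapsto t/(2-t)$ is increasing on $[0,1]$, which transfers the inequality in one line. Your version instead proves the statement from scratch, and after simplifying both losses to $\|x-y\|_1/(\cdot)$ the comparison of denominators you arrive at, $2\|xy\|_1+\|x-y\|_1\leq\|x+y\|_1$, is exactly the same pointwise inequality $2ab+|a-b|\leq a+b$ for $a,b\in[0,1]$ that underlies the JML ordering; so the two routes differ only in packaging, with yours having the advantage of being self-contained and making explicit where the box constraint $[0,1]^p$ is used. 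The only loose end is the degenerate case $x=y=0$, where both expressions are formally $0/0$; you dispose of it by appealing to reflexivity, which is consistent with how the paper treats these losses (in practice a smoothing constant is added to the denominators), so this is not a gap.
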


The proofs are analogous to those given in \cite{JMLsWangNeurIPS2023}. In particular, Theorem~\ref{thm:eqneq} indicates that when only hard labels are presented, $\overline{\Delta}_{\text{DML1}},\overline{\Delta}_{\text{DML2}}$, $\overline{\Delta}_{\text{SDL},L^1}$ are identical. Similarly, we can substitute the $L^1$ norm in DMLs with the squared $L^2$ norm, and $\overline{\Delta}_{\text{DML1,$L^2$}}, \overline{\Delta}_{\text{DML2,$L^2$}}$, $\overline{\Delta}_{\text{SDL},L^2}$ become the same. Therefore, we can safely replace the existing implementation of SDL with DMLs and no change will be incurred.

\section{Experiments}\label{sec:experiments}
In this section, we provide empirical evidence of the benefits of using soft labels. In particular, using QUBIQ \cite{QUBIQMenze2020}, which contains multi-rater information, we show that models trained with averaged annotation maps can significantly surpass those trained with majority votes and random selections. Leveraging LiTS \cite{LiTSBilicMIA2023} and KiTS \cite{KiTSHellerMIA2021}, we illustrate the synergistic effects of integrating LS and KD with DMLs. 

\subsection{Datasets}
\noindent \textbf{QUBIQ} is a recent challenge held at MICCAI 2020 and 2021, specifically designed to evaluate the inter-rater variability in medical imaging. Following \cite{MRNetJiCVPR2021,UsingSilvaMICCAIWorkshop2021}, we use QUBIQ 2020, which contains 7 segmentation tasks in 4 different CT and MR datasets: Prostate (55 cases, 2 tasks, 6 raters), Brain Growth (39 cases, 1 task, 7 raters), Brain Tumor (32 cases, 3 tasks, 3 raters), and Kidney (24 cases, 1 task, 3 raters). For each dataset, we calculate the average Dice score between each rater and the majority votes in Table \ref{tb:ratio}. In some datasets, such as Brain Tumor T2, the inter-rater disagreement can be quite substantial. In line with \cite{MRNetJiCVPR2021}, we resize all images to $256\times 256$. 

\noindent \textbf{LiTS} contains 201 high-quality CT scans of liver tumors. Out of these, 131 cases are designated for training and 70 for testing. As the ground-truth labels for the test set are not publicly accessible, we only use the training set. Following \cite{EMKDQinTMI2021}, all images are resized to $512 \times 512$ and the HU values of CT images are windowed to the range of [-60, 140].

\noindent \textbf{KiTS} includes 210 annotated CT scans of kidney tumors from different patients. In accordance with \cite{EMKDQinTMI2021}, all images are resized to $512 \times 512$ and the HU values of CT images are windowed to the range of [-200, 300].

\begin{table}
\tiny
\centering
\caption{The number of raters and the averaged Dice score between each rater and the majority votes for each QUBIQ dataset. D1: Prostate T1, D2: Prostate T2, D3: Brain Growth T1, D4: Brain Tumor T1, D5: Brain Tumor T2, D6: Brain Tumor T3, D7: Kidney T1.} \label{tb:ratio}
\begin{tabular}{cccccccc} \hlineB{2}
Dataset & D1 & D2 & D3 & D4 & D5 & D6 & D7 \\ 
\hline
\# Raters & 6 & 6 & 7 & 3 & 3 & 3 & 3 \\
Dice (\%) & 96.49 & 92.17 & 91.20 & 95.44 & 68.73 & 92.71 & 97.41 \\
\hlineB{2}
\end{tabular}
\end{table}

\subsection{Implementation Details}
We adopt a variety of backbones including ResNet50/18 \cite{ResNetHeCVPR2016}, EfficientNetB0 \cite{EfficientNetTanICML2019} and MobileNetV2 \cite{MobileNetV2SandlerCVPR2018}. All these models that have been pretrained on ImageNet \cite{ImageNetDengCVPR2009} are provided by timm library \cite{timmWightman2019}. We consider both UNet \cite{U-NetRonnebergerMICCAI2015} and DeepLabV3+ \cite{DeepLabV3+ChenECCV2018} as the segmentation method.

We train the models using SGD with an initial learning rate of 0.01, momentum of 0.9, and weight decay of 0.0005. The learning rate is decayed in a poly policy with an exponent of 0.9. The batch size is set to 8 and the number of epochs is 150 for QUBIQ, 60 for both LiTS and KiTS. We leverage a mixture of CE and DMLs weighted by 0.25 and 0.75, respectively. Unless otherwise specified, we use $\overline{\Delta}_{\text{DML1}}$ by default.

In this work, we are mainly interested in how models can benefit from the use of soft labels. The superiority of SDL over CE has been well established in the medical imaging community \cite{OptimizationTMI2020,nnU-NetIsenseeNatureMethods2021}, and our preliminary experiments also confirm this, as shown in Table \ref{tb:ceDMLs} (Appendix \ref{app:tables}). Therefore, we do not include any further comparison with CE in this paper.

\subsection{Evaluation Metrics} \label{sec:bdice}
We report both the Dice score and the expected calibration error (ECE) \cite{CalibrationGuoICML2017}. For QUBIQ experiments, we additionally present the binarized Dice score (BDice), which is the official evaluation metrics used in the QUBIQ challenge. To compute BDice, both predictions and soft labels are thresholded at different probability levels (0.1, 0.2, ..., 0.8, 0.9). We then compute the Dice score at each level and average these scores with all thresholds. 

For all experiments, we conduct 5-fold cross validation, making sure that each case is presented in exactly one validation set, and report the mean values in the aggregated validation set. We perform statistical tests according to the procedure detailed in \cite{OptimizationTMI2020} and highlight results that are significantly superior (with a significance level of 0.05) in red. 

\subsection{Results on QUBIQ}
In Table \ref{tb:qubiq}, we compare different training methods on QUBIQ using UNet-ResNet50. This comparison includes both hard labels, obtained through (i) majority votes \cite{LabelLemayMELBA2023} and (ii) random sampling each rater's annotation \cite{ImprovingJensenMICCAI2019}, as well as soft labels derived from (i) averaging across all annotations \cite{SoftSegGrosMIA2021,UsingSilvaMICCAIWorkshop2021,LabelLemayMELBA2023} and (ii) label smoothing \cite{InceptionV2V3SzegedyCVPR2016}. 

In the literature \cite{SoftSegGrosMIA2021,UsingSilvaMICCAIWorkshop2021,LabelLemayMELBA2023}, annotations are usually averaged with uniform weights. We additionally consider weighting each rater's annotation by its Dice score with respect to the majority votes, so that a rater who deviates far from the majority votes receives a low weight. Note that for all methods, the Dice score and ECE are computed with respect to the majority votes, while BDice is calculated as illustrated in Section \ref{sec:bdice}.

Generally, models trained with soft labels exhibit improved accuracy and calibration. In particular, averaging annotations with uniform weights obtains the highest BDice, while a weighted average achieves the highest Dice score. It is worth noting that the weighted average significantly outperforms the majority votes in terms of the Dice score which is evaluated based on the majority votes themselves. We hypothesize that this is because soft labels contain extra inter-rater information, which can ease the network optimization at those ambiguous regions. Overall, we find the weighted average outperforms other methods, with the exception of Brain Tumor T2, where there is a high degree of disagreement among raters.

We compare our method with state-of-the-art (SOTA) methods using UNet-ResNet50 in Table \ref{tb:qubiq_sota}. In our method, we average annotations with uniform weights for Brain Tumor T2 and with each rater's Dice score for all other datasets. Our method, which simply averages annotations to produce soft labels obtains superior results compared to methods that adopt complex architectures or training techniques.

\begin{table}[h]
\tiny
\centering
\caption{Comparing hard labels with soft labels on QUBIQ using UNet-ResNet50.} \label{tb:qubiq}
\begin{tabular}{cccccccccc}
\hlineB{2}
Dataset & Metric & Majority & Random & Uniform & Weighted & LS \\ 
\hline
\multicolumn{1}{c}{\multirow{3}{*}{Prostate T1}}
& Dice (\%) & \cellcolor{red!15}{95.65} & \cellcolor{red!15}{95.80} & \cellcolor{red!15}{95.74} & \cellcolor{red!15}{95.99} & \cellcolor{red!15}{95.71} \\
& BDice (\%) & 94.72 & \cellcolor{red!15}{95.15} & \cellcolor{red!15}{95.19} & \cellcolor{red!15}{95.37} & \cellcolor{red!15}{94.91} \\
& ECE (\%) & 0.51 & 0.39 & \cellcolor{red!15}{0.22} & \cellcolor{red!15}{0.20} & 0.36 \\
\hline
\multicolumn{1}{c}{\multirow{3}{*}{Prostate T2}}
& Dice (\%) & \cellcolor{red!15}{89.39} & 88.87 & \cellcolor{red!15}{89.57} & \cellcolor{red!15}{89.79} & \cellcolor{red!15}{89.82} \\
& BDice (\%) & 88.31 & 88.23 & \cellcolor{red!15}{89.35} & \cellcolor{red!15}{89.66} & 88.85 \\
& ECE (\%) & 0.52 & 0.47 & \cellcolor{red!15}{0.26} & \cellcolor{red!15}{0.25} & 0.41 \\
\hline
\multicolumn{1}{c}{\multirow{3}{*}{Brain Growth}}
& Dice (\%) & \cellcolor{red!15}{91.09} & 90.65 & \cellcolor{red!15}{90.94} & \cellcolor{red!15}{91.46} & \cellcolor{red!15}{91.23} \\
& BDice (\%) & 88.72 & 88.81 & \cellcolor{red!15}{89.89} & \cellcolor{red!15}{90.40} & \cellcolor{red!15}{89.88} \\
& ECE (\%) & 1.07 & 0.85 & \cellcolor{red!15}{0.27} & 0.34 & 0.41 \\
\hline
\multicolumn{1}{c}{\multirow{3}{*}{Brain Tumor T1}}
& Dice (\%) & 86.46 & \cellcolor{red!15}{87.24} & \cellcolor{red!15}{87.74} & \cellcolor{red!15}{87.78} & \cellcolor{red!15}{87.84} \\
& BDice (\%) & 85.74 & \cellcolor{red!15}{86.59} & \cellcolor{red!15}{86.67} & \cellcolor{red!15}{86.92} & \cellcolor{red!15}{86.91} \\
& ECE (\%) & 0.62 & 0.55 & \cellcolor{red!15}{0.38} & \cellcolor{red!15}{0.36} & \cellcolor{red!15}{0.37} \\
\hline
\multicolumn{1}{c}{\multirow{3}{*}{Brain Tumor T2}}
& Dice (\%) & 58.58 & 48.86 & 52.42 & \cellcolor{red!15}{61.01} & \cellcolor{red!15}{61.23} \\
& BDice (\%) & 38.68 & 49.19 & \cellcolor{red!15}{55.11} & 44.23 & 40.61 \\
& ECE (\%) & 0.25 & 0.81 & 0.74 & 0.26 & \cellcolor{red!15}{0.22} \\
\hline
\multicolumn{1}{c}{\multirow{3}{*}{Brain Tumor T3}}
& Dice (\%) & 53.54 & 54.64 & 53.45 & \cellcolor{red!15}{56.75} & \cellcolor{red!15}{57.01} \\
& BDice (\%) & 52.33 & 53.53 & 51.98 & 53.90 & \cellcolor{red!15}{55.26} \\
& ECE (\%) & 0.17 & 0.17 & 0.14 & \cellcolor{red!15}{0.09} & \cellcolor{red!15}{0.11} \\
\hline
\multicolumn{1}{c}{\multirow{3}{*}{Kidney}}
& Dice (\%) & 62.96 & 68.10 & 71.33 & \cellcolor{red!15}{76.18} & 71.21 \\
& BDice (\%) & 62.47 & 67.69 & 70.82 & \cellcolor{red!15}{75.67} & 70.41 \\
& ECE (\%) & 0.88 & 0.78 & 0.67 & \cellcolor{red!15}{0.53} & 0.62 \\
\hline
\multicolumn{1}{c}{\multirow{3}{*}{All}}
& Dice (\%) & 76.80 & 76.30 & 77.31 & \cellcolor{red!15}{79.85} & 79.15 \\
& BDice (\%) & 72.99 & 75.59 & \cellcolor{red!15}{77.00} & 76.59 & 75.26 \\
& ECE (\%) & 0.57 & 0.57 & 0.38 & \cellcolor{red!15}{0.29} & 0.35 \\
\hlineB{2}
\end{tabular}
\end{table}

\begin{table}[!h]
\tiny
\centering
\caption{Comparing SOTA methods with ours on QUBIQ using UNet-ResNet50. All results are BDice (\%).} \label{tb:qubiq_sota}
\begin{tabular}{ccccccc}
\hlineB{2}
Dataset & Dropout \cite{DropoutGalICML2016} & Multi-head \cite{BIWDNGuanAAAI2018} & MRNet \cite{MRNetJiCVPR2021} & SoftSeg \cite{SoftSegGrosMIA2021,LabelLemayMELBA2023} & Ours \\ 
\hline
Prostate T1 & \cellcolor{red!15}{94.91} & \cellcolor{red!15}{95.18} & \cellcolor{red!15}{95.21} & \cellcolor{red!15}{95.02} & \cellcolor{red!15}{95.37} \\
Prostate T2 & 88.43 & 88.32 & 88.65 & 88.81 & \cellcolor{red!15}{89.66} \\
Brain Growth & 88.86 & 89.01 & 89.24 & 89.36 & \cellcolor{red!15}{90.40} \\
Brain Tumor T1 & 85.98 & \cellcolor{red!15}{86.45} & \cellcolor{red!15}{86.33} & \cellcolor{red!15}{86.41} & \cellcolor{red!15}{86.92} \\
Brain Tumor T2 & 48.04 & 51.17 & 51.82 & 52.56 & \cellcolor{red!15}{55.11} \\
Brain Tumor T3 & 52.49 & \cellcolor{red!15}{53.68} & \cellcolor{red!15}{54.22} & 52.43 & \cellcolor{red!15}{53.90} \\
Kidney & 66.53 & 68.00 & 68.56 & 69.83 & \cellcolor{red!15}{75.67} \\
All & 75.03 & 75.97 & 76.18 & 76.34 & \cellcolor{red!15}{78.14} \\
\hlineB{2}
\end{tabular}
\end{table}

\subsection{Results on LiTS and KiTS}
Wang et al. \cite{JMLsWangNeurIPS2023} empirically found that a well-calibrated teacher can distill a more accurate student. Concurrently, Menon et al. \cite{AStatisticalMenonICML2021} argued that the effectiveness of KD arises from the teacher providing an estimation of the Bayes class-probabilities $p^*(y|x)$ and this can lower the variance of the student's empirical loss. 

In line with these findings, in Appendix \ref{app:thm:ce}, we prove $|\mathbb{E}[p^*(y|x) - f(x)]| \leq \mathbb{E}[|\mathbb{E}[y|f(x)]-f(x)|]$. That is, the bias of the estimation is bounded above by the calibration error and this explains why the calibration of the teacher would be important for the student. Inspired by this, we apply a recent kernel density estimator (KDE) \cite{KDE-XEPopordanoskaNeurIPS2022} that provides consistent estimation of $\mathbb{E}[y|f(x)]$. We then adopt it as a post-hoc calibration method to replace the temperature scaling to calibrate the teacher in order to improve the performance of the student. For more details of KDE, please refer to Appendix \ref{app:kde}.

In Table \ref{tb:lits_kits}, we compare models trained with hard labels, LS \cite{InceptionV2V3SzegedyCVPR2016} and KD \cite{KDHintonNeurIPSWorkshop2015} on LiTS and KiTS, respectively. For all KD experiments, we use UNet-ResNet50 as the teacher. Again, we obtain noticeable improvements in both the Dice score and ECE. It is worth noting that for UNet-ResNet18 and UNet-EfficientNetB0 on LiTS, the student's Dice score exceeds that of the teacher.

\begin{table}[h]
\tiny
\centering
\caption{Comparing hard labels with LS and KD on LiTS and KiTS.} \label{tb:lits_kits}
\begin{tabular}{ccccccccc}
\hlineB{2}
\multirow{2}{*}{Method} & \multirow{2}{*}{Backbone} & \multirow{2}{*}{Metric} & \multicolumn{3}{c}{LiTS} & \multicolumn{3}{c}{KiTS} \\ \cline{4-9} & & & Hard & LS & KD & Hard & LS & KD \\ \hline
\multirow{2}{*}{UNet} & \multirow{2}{*}{ResNet50} 
& Dice (\%) & 59.79 & \cellcolor{red!15}{60.59} & - & 72.66 & \cellcolor{red!15}{73.92} & - \\
& & ECE (\%) & 0.51 & \cellcolor{red!15}{0.49} & - & 0.39 & \cellcolor{red!15}{0.33} & - \\ \hline  
\multirow{2}{*}{UNet} & \multirow{2}{*}{ResNet18} 
& Dice (\%) & 57.92 & 58.60 & \cellcolor{red!15}{60.30} & 67.96 & 69.09 & \cellcolor{red!15}{71.34} \\
& & ECE (\%) & 0.52 & \cellcolor{red!15}{0.48} & 0.50 & 0.44 & \cellcolor{red!15}{0.38} & 0.44 \\ \hline   
\multirow{2}{*}{UNet} & \multirow{2}{*}{EfficientNetB0}  
& Dice (\%) & 56.90 & 57.66 & \cellcolor{red!15}{60.11} & 70.31 & 71.12 & \cellcolor{red!15}{71.73} \\
& & ECE (\%) & 0.56 & \cellcolor{red!15}{0.47} & 0.52 & 0.39 & \cellcolor{red!15}{0.35} & 0.39 \\ \hline   
\multirow{2}{*}{UNet} & \multirow{2}{*}{MobileNetV2}  
& Dice (\%) & 56.16 & 57.20 & \cellcolor{red!15}{58.92} & 67.46 & 68.19 & \cellcolor{red!15}{68.85} \\
& & ECE (\%) & 0.54 & \cellcolor{red!15}{0.48} & 0.50 & 0.42 & \cellcolor{red!15}{0.38} & 0.41 \\ \hline   
\multirow{2}{*}{DeepLabV3+} & \multirow{2}{*}{ResNet18}  
& Dice (\%) & 56.10 & 57.07 & \cellcolor{red!15}{59.12} & 69.95 & \cellcolor{red!15}{70.61} & \cellcolor{red!15}{70.80} \\
& & ECE (\%) & 0.53 & \cellcolor{red!15}{0.50} & 0.52 & 0.40 & \cellcolor{red!15}{0.38} & 0.40 \\ 
\hlineB{2}
\end{tabular}
\end{table}

\subsection{Ablation Studies}
In Table \ref{tb:DMLs} (Appendix \ref{app:tables}), we compare SDL with DMLs. For QUBIQ, we train UNet-ResNet50 with soft labels obtained from weighted average and report BDice. For LiTS and KiTS, we train UNet-ResNet18 with KD and present the Dice score. For a fair comparison, we disable KDE in all KD experiments. 

We find models trained with SDL can still benefit from soft labels to a certain extent because (i) models are trained with a mixture of CE and SDL, and CE is compatible with soft labels; (ii) although SDL pushes predictions towards vertices, it can still add some regularization effects in a binary segmentation setting. However, SDL is notably outperformed by DMLs. As for DMLs, we find $\overline{\Delta}_{\text{DML1}}$ is slightly superior to $\overline{\Delta}_{\text{DML2}}$ and recommend using $\overline{\Delta}_{\text{DML1}}$ in practice.

In Table \ref{tb:kdloss} (Appendix \ref{app:tables}), we ablate the contribution of each KD term on LiTS and KiTS with a UNet-ResNet18 student. In the table, CE and DML represent adding the CE and DML term between the teacher and the student, respectively. In Table \ref{tb:bandwidth} (Appendix \ref{app:tables}), we illustrate the effect of bandwidth that controls the smoothness of KDE. Results shown in the tables verify the effectiveness of the proposed loss and the KDE method.

\section{Future Works}
In this study, our focus is on extending the Dice loss within the realm of medical image segmentation. It may be intriguing to apply DMLs in the context of long-tailed classification \cite{DiceLossLiACL2020}. Additionally, while we employ DMLs in the label space, it holds potential for measuring the similarity of two feature vectors \cite{MasKDHuangICLR2023}, for instance, as an alternative to the $L^p$ norm or cosine similarity.

\section{Conclusion}\label{sec:conclusion}
In this work, we introduce Dice semimetrics losses (DMLs), which are identical to the soft Dice loss (SDL) in a standard setting with hard labels, but are fully compatible with soft labels. Our extensive experiments on the public QUBIQ, LiTS and KiTS benchmarks validate that incorporating soft labels leads to higher Dice score and lower calibration error, indicating that these losses can find wide application in diverse medical image segmentation problems. Hence, we suggest to replace the existing implementation of SDL with DMLs.

\section*{Acknowledgements}
We acknowledge support from the Research Foundation - Flanders (FWO) through project numbers G0A1319N and S001421N, and funding from the Flemish Government under the Onderzoeksprogramma Artifici\"{e}le Intelligentie (AI) Vlaanderen programme. The resources and services used in this work were provided by the VSC (Flemish Supercomputer Center), funded by the Research Foundation - Flanders (FWO) and the Flemish Government.

\bibliographystyle{splncs04}
\bibliography{miccai2023}

\clearpage

\appendix
\section{Theorem \ref{thm:rhodice}} \label{app:thm_semi}
\begin{proof}
We omit the subscript since the proof for $\overline{\Delta}_{\text{DML1}}$ and $\overline{\Delta}_{\text{DML2}}$ are identical. Note that $0\leq \overline{\Delta}_{\text{DML}}=\frac{\overline{\Delta}_{\text{JML}}}{2- \overline{\Delta}_{\text{JML}}}\leq \overline{\Delta}_{\text{JML}}\leq 1$ and $\overline{\Delta}_{\text{JML}}$ is a metric, thus satisfying the triangle inequality.
    \begin{align}
        & \overline{\Delta}_{\text{DML}}(a,c) \leq \rho(\overline{\Delta}_{\text{DML}}(a,b)+\overline{\Delta}_{\text{DML}}(b,c)) \\
        \Rightarrow & \frac{\overline{\Delta}_{\text{JML}}(a,c)}{2- \overline{\Delta}_{\text{JML}}(a,c)} \leq \frac{\rho\overline{\Delta}_{\text{JML}}(a,b)}{2- \overline{\Delta}_{\text{JML}}(a,b)} + \frac{\rho\overline{\Delta}_{\text{JML}}(b,c)}{2- \overline{\Delta}_{\text{JML}}(b,c)} \\
        \Rightarrow & \frac{\overline{\Delta}_{\text{JML}}(a,b) + \overline{\Delta}_{\text{JML}}(b,c)}{2 - \overline{\Delta}_{\text{JML}}(a,b) - \overline{\Delta}_{\text{JML}}(b,c)} \leq \frac{\rho\overline{\Delta}_{\text{JML}}(a,b)}{2- \overline{\Delta}_{\text{JML}}(a,b)} + \frac{\rho\overline{\Delta}_{\text{JML}}(b,c)}{2- \overline{\Delta}_{\text{JML}}(b,c)} \\
        \Rightarrow & \overline{\Delta}_{\text{JML}}(a,b) \Big(\frac{1}{2 - \overline{\Delta}_{\text{JML}}(a,b) - \overline{\Delta}_{\text{JML}}(b,c)} - \frac{\rho}{2-\overline{\Delta}_{\text{JML}}(a,b)}\Big) \leq \nonumber \\
        & \overline{\Delta}_{\text{JML}}(b,c) \Big(\frac{\rho}{2-\overline{\Delta}_{\text{JML}}(b,c)} - \frac{1}{2-\overline{\Delta}_{\text{JML}}(a,b)-\overline{\Delta}_{\text{JML}}(b,c)} \Big) \label{eq:DMLleftright}.
    \end{align}
    
    Assume that $\overline{\Delta}_{\text{DML}}(a,b) + \overline{\Delta}_{\text{DML}}(b,c) \leq \frac{1}{\rho}$, otherwise $\rho$-relaxed triangle inequality trivially holds. Now we show that the left-hand side of Eq. (\ref{eq:DMLleftright}) is no greater than 0.
    \begin{align}
    & \frac{\rho}{2-\overline{\Delta}_{\text{JML}}(a,b)} \geq \frac{1}{2 - \overline{\Delta}_{\text{JML}}(a,b) - \overline{\Delta}_{\text{JML}}(b,c)} \\
    \Rightarrow & \frac{\rho+\overline{\Delta}_{\text{DML}}(a,b)}{1+\overline{\Delta}_{\text{DML}}(a,b)} \geq \frac{1+(1+\rho)\overline{\Delta}_{\text{DML}}(b,c)}{1+\overline{\Delta}_{\text{DML}}(b,c)} \label{eq:DMLright1} \\
    \Rightarrow & \frac{\rho+\overline{\Delta}_{\text{DML}}(a,b)}{1+\overline{\Delta}_{\text{DML}}(a,b)} \geq \frac{1+(1+\rho)(\frac{1}{\rho}-\overline{\Delta}_{\text{DML}}(a,b))}{1+(\frac{1}{\rho}-\overline{\Delta}_{\text{DML}}(a,b))} \label{eq:DMLright2} \\
    \Rightarrow & \rho^2 -\rho - 1 + \rho^2\overline{\Delta}_{\text{DML}}^2(a,b) \geq 0 \\
    \Rightarrow & \rho \geq \frac{1+\sqrt{5}}{2} \approx 1.62.
    \end{align}
    where in Eq. (\ref{eq:DMLright2}), we use the fact the right hand side of Eq. (\ref{eq:DMLright1}) is an increasing function of $\overline{\Delta}_{\text{DML}}(b,c)$. Similarly, we can show that the right-hand side of Eq. (\ref{eq:DMLleftright}) is no less than 0 when $\rho \geq \frac{1+\sqrt{5}}{2}$. Note that for $a=[0,1], b=[1,1], c=[1,0]$, $\overline{\Delta}_{\text{DML}}(a,c)=1, \overline{\Delta}_{\text{DML}}(a,b)=\overline{\Delta}_{\text{DML}}(b,c)=\frac{1}{3}$, so $\rho$ should at least be $\frac{3}{2}$. We leave the tight bound of $\rho$ as an open problem. $\hfill\square$
    \end{proof}

\section{Theorem \ref{thm:ce}} \label{app:thm:ce}
\begin{theorem} \label{thm:ce}
 $|\mathbb{E}[p^*(y|x) - f(x)]| \leq \mathbb{E}[|\mathbb{E}[y|f(x)]-f(x)|]$.
\end{theorem}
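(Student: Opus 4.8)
The plan is to reduce the claim to a single application of Jensen's inequality, after rewriting both sides in terms of the same random quantity $g := \mathbb{E}[y|f(x)] - f(x)$. The key observation is that $p^*(y|x) = \mathbb{E}[y|x]$ is the Bayes posterior and that $f(x)$ is a measurable function of $x$, so the $\sigma$-algebra generated by $f(x)$ is coarser than the one generated by $x$. By the tower property of conditional expectation, $\mathbb{E}[\mathbb{E}[y|x]\,|\,f(x)] = \mathbb{E}[y|f(x)]$, which means that the Bayes posterior $p^*(y|x)$ and the calibration target $\mathbb{E}[y|f(x)]$ are two conditional means of the \emph{same} label $y$ and therefore share the common unconditional mean $\mathbb{E}[y]$.

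Concretely, I would proceed in three steps. First, I would record the identity $\mathbb{E}[p^*(y|x)] = \mathbb{E}[y] = \mathbb{E}[\mathbb{E}[y|f(x)]]$, obtained by applying the tower property to each of the two conditional expectations. Second, subtracting the common term $\mathbb{E}[f(x)]$ from this identity gives $\mathbb{E}[p^*(y|x) - f(x)] = \mathbb{E}[\mathbb{E}[y|f(x)] - f(x)] = \mathbb{E}[g]$; this is the crucial move, since it replaces the quantity inside the absolute value on the left-hand side (which involves the unobservable $p^*$) with the unconditional expectation of the observable calibration residual $g$. Third, I would take absolute values and invoke Jensen's inequality (equivalently, the triangle inequality for integrals), $|\mathbb{E}[g]| \le \mathbb{E}[|g|]$, to conclude $|\mathbb{E}[p^*(y|x) - f(x)]| = |\mathbb{E}[g]| \le \mathbb{E}[|g|] = \mathbb{E}[|\mathbb{E}[y|f(x)] - f(x)|]$, which is exactly the stated bound, the right-hand side being the expected calibration error.

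There is no deep obstacle here; the argument is essentially the tower property followed by Jensen. The only point that warrants care is the measurability step that collapses $\mathbb{E}[\mathbb{E}[y|x]\,|\,f(x)]$ to $\mathbb{E}[y|f(x)]$, i.e. justifying that conditioning on the prediction $f(x)$ is a legitimate coarsening of conditioning on the input $x$; this is what guarantees that both conditional means integrate to $\mathbb{E}[y]$ and hence that the two sides of the inequality can be expressed through the single residual $g$. Once that identification is in place, everything reduces to the elementary fact $|\mathbb{E}[g]| \le \mathbb{E}[|g|]$.
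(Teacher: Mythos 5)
Your proof is correct and follows essentially the same route as the paper's: both arguments reduce $\mathbb{E}[p^*(y|x)-f(x)]$ to $\mathbb{E}[\mathbb{E}[y|f(x)]-f(x)]$ by noting that $p^*(y|x)$ and $\mathbb{E}[y|f(x)]$ share the common mean $\mathbb{E}[y]$ via the tower property, and then conclude with Jensen's inequality $|\mathbb{E}[g]|\leq\mathbb{E}[|g|]$. Your explicit justification that $\sigma(f(x))\subseteq\sigma(x)$ makes the tower-property step slightly more careful than the paper's, but the argument is the same.
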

\begin{proof}
Subscripts are omitted for simplicity and all expectations are over $x, y \sim \mathbb{P}_{x,y}$, the (unknown) data generating distribution. Note that $\mathbb{E}[|\mathbb{E}[y|f(x)]-f(x)|]$ is the definition of the calibration error. The proof is similar to \cite{OnPopordanoskaMICCAI2021}:
\begin{align}
    & |\mathbb{E}[p^*(y|x) - f(x)]| \\
    = & |\mathbb{E}[p^*(y|x) - \mathbb{E}[y|f(x)] + \mathbb{E}[y|f(x)] - f(x)]| \\
    = & |\mathbb{E}[p^*(y|x)] - \mathbb{E}[\mathbb{E}[y|f(x)]] + \mathbb{E}[\mathbb{E}[y|f(x)] - f(x)]| \\
    = & |\underbrace{\mathbb{E}[p^*(y|x) - y]}_{=0} + \mathbb{E}[\mathbb{E}[y|f(x)] - f(x)]| \\
    \leq & \mathbb{E}[|\mathbb{E}[y|f(x)] - f(x)|]. \label{eq:jensen}
\end{align}
Eq. (\ref{eq:jensen}) follows from Jensen's inequality due to the convexity of the absolute value. $\hfill\square$
\end{proof}

\section{Tables} \label{app:tables}
\begin{table}[] 
\tiny
\centering
\caption{Comparing CE with DML on QUBIQ, LiTS and KiTS using UNet-ResNet50. All results are the Dice score (\%).} \label{tb:ceDMLs}
\begin{tabular}{ccc} \hlineB{2}
Dataset & CE & DML \\ \hline
QUBIQ & 74.97 & \cellcolor{red!15}{76.80} \\ 
LiTS & 57.76 & \cellcolor{red!15}{59.79} \\ 
KiTS & 68.52 & \cellcolor{red!15}{72.66} \\ 
\hlineB{2}
\end{tabular}

\vspace{5mm}

\caption{Comparing SDL with DMLs on QUBIQ, LiTS and KiTS. All results are the Dice score (\%).} \label{tb:DMLs}
\begin{tabular}{ccccc}
\hlineB{2}
Dataset & Hard & $\overline{\Delta}_{\text{SDL}}$ & $\overline{\Delta}_{\text{DML1}}$ & $\overline{\Delta}_{\text{DML2}}$ \\ 
\hline
QUBIQ & 72.99 & 73.79 & \cellcolor{red!15}{76.59} & \cellcolor{red!15}{76.42} \\
LiTS & 57.92 & 58.12 & \cellcolor{red!15}{59.31} & \cellcolor{red!15}{59.12} \\
KiTS & 67.96 & 68.26 & \cellcolor{red!15}{69.29} & \cellcolor{red!15}{69.07} \\
\hlineB{2}
\end{tabular}

\vspace{5mm}

\caption{Evaluating each KD term on LiTS and KiTS using a UNet-ResNet18 student. All results are the Dice score (\%).} \label{tb:kdloss}
\begin{tabular}{ccccc}
\hlineB{2}
Dataset & Hard & CE & DML & KDE \\ 
\hline
LiTS & 57.92 & 58.23 & 59.31 & \cellcolor{red!15}{60.30} \\
KiTS & 67.96 & 68.14 & 69.29 & \cellcolor{red!15}{71.34} \\
\hlineB{2}
\end{tabular}

\vspace{5mm}

\caption{Comparing different bandwidths on LiTS and KiTS using a UNet-ResNet18 student. All results are the Dice score (\%).} \label{tb:bandwidth}
\begin{tabular}{cccccccc} \hlineB{2}
Dataset & 0 & 5e-5 & 1e-4 & 5e-4 & 1e-3 & 5e-3 & 1e-2 \\ \hline
LiTS & 59.31 & 59.05 & 59.07 & \cellcolor{red!15}{59.97} & \cellcolor{red!15}{60.30} & 59.56 & 59.62 \\
KiTS & 69.29 & 69.05 & 69.80 & 70.41 & \cellcolor{red!15}{71.34} & 68.75 & 69.18 \\
\hlineB{2}
\end{tabular}
\end{table}

\section{The Compatible Tversky Loss and the Compatible Focal Tversky Loss} \label{app:tversky}
The Tversky index is defined as 
\begin{equation}
    \frac{|v|}{|v| + \alpha (|x| - |v|) + \beta (|y| - |v|)}
\end{equation}
where $\alpha, \beta \geq 0$ controls the magnitude of penalties for false positives and false negatives, respectively. With $\alpha = \beta = 0.5$, the Tversky index becomes the Dice score; with $\alpha = \beta = 1$, it simplifies to the Jaccard index.

Adopting the same idea as SDL and SJL, the soft Tversky loss (STL) \cite{SoftTverskyLossSalehiMICCAIWorkshop2017} is written as 
\begin{equation}
     \overline{\Delta}_{\text{STL}}: x\in[0,1]^p, y\in\{0,1\}^p \mapsto 1 - \frac{\langle x, y\rangle}{\alpha\|x\|_1+\beta\|y\|_1+(1-\alpha-\beta)\langle x, y\rangle}.
\end{equation}

Given that SDL, as a particular instantiation of STL, is not compatible with soft labels, STL inherits this incompatibility as well. To render STL compatible with soft labels, the key lies in expressing $|v|$ as $\frac{1}{2}(\|x\|_1+\|y\|_1-\|x-y\|_1)$. Using this expression, we introduce the compatible Tversky loss (CTL) as
\begin{align}
     \overline{\Delta}_{\text{CTL}}&: x\in[0,1]^p, y\in [0,1]^p \\
     & \mapsto 1 - \frac{\|x\|_1+\|y\|_1-\|x-y\|_1}{2\alpha\|x\|_1+2\beta\|y\|_1+(1-\alpha-\beta)(\|x\|_1+\|y\|_1-\|x-y\|_1)}.
\end{align}

It is worth noting that due to the inherent asymmetry of the Tversky index, $\overline{\Delta}_{\text{CTL}}$ does not satisfy symmetry and therefore cannot be a semimetric. However, it is compatible with soft labels, as shown in the following theorem.
\begin{theorem}
$\overline{\Delta}_{\text{CTL}}$ satisfies reflexivity and positivity.
\end{theorem}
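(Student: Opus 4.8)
The plan is to reduce everything to a single soft-intersection quantity and then verify the two axioms coordinate-wise. First I would observe that, because $x,y\in[0,1]^p$ have nonnegative entries, the scalar identity $s+t-|s-t|=2\min(s,t)$ gives
\begin{equation}
\|x+y\|_1-\|x-y\|_1=\sum_{i=1}^p\big(x_i+y_i-|x_i-y_i|\big)=2\sum_{i=1}^p\min(x_i,y_i)=:2V,
\end{equation}
so $V$ plays exactly the role of the soft intersection $|v|$. I would also record the two companion identities $\|x\|_1-V=\sum_i\max(x_i-y_i,0)$ and $\|y\|_1-V=\sum_i\max(y_i-x_i,0)$, both nonnegative, which vanish precisely when $x\leq y$ (resp.\ $y\leq x$) entrywise.

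With these in hand, I would rewrite the denominator of $\overline{\Delta}_{\text{CTL}}$ as
\begin{equation}
2\alpha\|x\|_1+2\beta\|y\|_1+(1-\alpha-\beta)\,2V=2\big(\alpha(\|x\|_1-V)+\beta(\|y\|_1-V)+V\big),
\end{equation}
so that $\overline{\Delta}_{\text{CTL}}=1-\frac{V}{\alpha(\|x\|_1-V)+\beta(\|y\|_1-V)+V}$. Since $\alpha,\beta>0$ and each of the three summands in the denominator is nonnegative, the denominator is strictly positive whenever $(x,y)\neq(0,0)$; the lone degenerate case $x=y=0$ is handled by the usual convention that the loss equals $0$ there (equivalently, by a smoothing constant).

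Reflexivity then follows immediately: at $x=y=a$ both positive-part sums vanish, hence $V=\|a\|_1$ and the denominator equals $\|a\|_1=V$, giving $\overline{\Delta}_{\text{CTL}}(a,a)=0$. For positivity I would note that $\overline{\Delta}_{\text{CTL}}(x,y)>0$ is equivalent to $V<\alpha(\|x\|_1-V)+\beta(\|y\|_1-V)+V$, i.e.\ to $\alpha(\|x\|_1-V)+\beta(\|y\|_1-V)>0$. If $x\neq y$, some coordinate $i$ satisfies $x_i\neq y_i$, so exactly one of $\max(x_i-y_i,0)$ and $\max(y_i-x_i,0)$ is strictly positive, making the corresponding full sum $\|x\|_1-V$ or $\|y\|_1-V$ strictly positive; since both $\alpha,\beta>0$, it follows that $\alpha(\|x\|_1-V)+\beta(\|y\|_1-V)>0$, which proves the claim.

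The computation is elementary once the identity for $2V$ is in place, so I do not expect a genuine obstacle. The one point requiring care is the strict positivity of $\alpha$ and $\beta$: if either were allowed to be $0$, the positivity axiom would fail (for instance, with $\beta=0$ one can take $x\leq y$ entrywise with $x\neq y$, whence $\|x\|_1-V=0$ and the loss is $1-V/V=0$ despite $x\neq y$). The theorem should therefore be read under $\alpha,\beta>0$, which is precisely the regime in which the Tversky index specializes to the Dice score, and the degenerate $x=y=0$ case is the only other place where the argument invokes the smoothing convention.
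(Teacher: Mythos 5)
Your proof is correct and is essentially the paper's argument in different clothing: your condition $\alpha(\|x\|_1-V)+\beta(\|y\|_1-V)=0$ with $V=\sum_i\min(x_i,y_i)$ is literally the paper's final identity $\alpha\sum_{i\in S_1}(x_i-y_i)+\beta\sum_{i\in S_2}(y_i-x_i)=0$ obtained by splitting indices according to the sign of $x_i-y_i$. Your two side remarks --- that positivity genuinely requires $\alpha,\beta>0$ (the paper only assumes $\alpha,\beta\geq 0$) and that $x=y=0$ needs a smoothing convention --- are both valid and worth making explicit, as the paper leaves them implicit.
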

\begin{proof}
Let $S_1=\{i:x_i\geq y_i\}$ and $S_2=\{i:x_i<y_i\}$.
\begin{align}
& \overline{\Delta}_{\text{CTL}}(x,y) = 0 \\
\Rightarrow & (\alpha-\beta)\|x\|_1+(-\alpha+\beta)\|y\|_1+(\alpha+\beta)\|x-y\|_1=0 \\
\Rightarrow & (\alpha-\beta)\sum_{i\in S_1}x_i + (\alpha-\beta)\sum_{i\in S_2}x_i + (-\alpha+\beta)\sum_{i\in S_1}y_i + (-\alpha+\beta)\sum_{i\in S_2}y_i \\
& (\alpha+\beta)\sum_{i\in S_1}x_i - (\alpha+\beta)\sum_{i\in S_2}x_i - (\alpha+\beta)\sum_{i\in S_1}y_i + (\alpha+\beta)\sum_{i\in S_2}y_i = 0 \\
\Rightarrow & \alpha \sum_{i\in S_1}(x_i-y_i) + \beta \sum_{i\in S_2}(y_i-x_i) = 0
\end{align}
where the last equality holds if and only if $x=y$. $\hfill\square$
\end{proof}

Again, following a similar proof as given in \cite{JMLsWangNeurIPS2023}, we can show that CTL is identical to STL in a standard setting with hard labels.
\begin{theorem}
$\forall x\in[0,1]^p,\  y\in \{0,1\}^p$ and $x\in \{0,1\}^p,\  y\in [0,1]^p$, $\overline{\Delta}_{\text{STL}}=\overline{\Delta}_{\text{CTL}}$. $\exists x, y \in [0,1]^p, \overline{\Delta}_{\text{STL}}\neq\overline{\Delta}_{\text{CTL}}$.
\end{theorem}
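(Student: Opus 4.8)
The plan is to reduce the entire statement to a single coordinate-wise identity and then treat the two assertions separately. The crucial observation, exactly as in the SDL/DML case where $|v|$ was rewritten as $\tfrac12(\|x+y\|_1-\|x-y\|_1)$, is that for any real $a$ and any binary $b\in\{0,1\}$ we have $|a+b|-|a-b|=2ab$: the case $b=0$ gives $0=0$, and the case $b=1$ with $a\in[0,1]$ gives $(a+1)-(1-a)=2a$. Summing over the $p$ coordinates shows that whenever $x\in[0,1]^p,\,y\in\{0,1\}^p$ (or symmetrically $x\in\{0,1\}^p,\,y\in[0,1]^p$), one has $\|x+y\|_1-\|x-y\|_1=2\langle x,y\rangle$.

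First I would substitute this identity into the numerator and into the last term of the denominator of $\overline{\Delta}_{\text{CTL}}$. Replacing $\|x+y\|_1-\|x-y\|_1$ by $2\langle x,y\rangle$ turns the fraction into
\[
\frac{2\langle x,y\rangle}{2\alpha\|x\|_1+2\beta\|y\|_1+2(1-\alpha-\beta)\langle x,y\rangle},
\]
and cancelling the common factor of $2$ from numerator and denominator recovers exactly the fraction defining $\overline{\Delta}_{\text{STL}}$. Hence $\overline{\Delta}_{\text{STL}}=\overline{\Delta}_{\text{CTL}}$ on both mixed hard/soft domains. No computation beyond this cancellation is required, and the asymmetric Tversky weights $\alpha,\beta$ need not be tracked, since the factor of $2$ appears uniformly across all three denominator terms.

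For the second assertion I would exhibit a point at which the identity fails. Taking $p=1$ and $x=y=\tfrac12$, the naive inner-product form of $\overline{\Delta}_{\text{STL}}$ uses $\langle x,y\rangle=\tfrac14$, whereas $\overline{\Delta}_{\text{CTL}}$ effectively uses $\tfrac12(\|x+y\|_1-\|x-y\|_1)=\tfrac12$. Since $\overline{\Delta}_{\text{CTL}}$ is reflexive by the preceding theorem, $\overline{\Delta}_{\text{CTL}}(x,x)=0$; on the other hand, with $\alpha=\beta=\tfrac12$ the naive STL evaluates to $1-\tfrac{1/4}{1/2}=\tfrac12\neq 0$, so the two surrogates differ. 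This single witness suffices, and the same example works for generic $\alpha,\beta$.

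The only genuine subtlety, rather than a real obstacle, is the status of $\overline{\Delta}_{\text{STL}}$ on soft $y$: its declared domain has $y\in\{0,1\}^p$, so the inequality must be read as comparing $\overline{\Delta}_{\text{CTL}}$ against the naive inner-product extension of STL to $[0,1]^p$, which is precisely the incompatible surrogate the paper contrasts against. Once this reading is fixed, both parts follow immediately from the coordinate identity together with the one-line counterexample, mirroring the argument used for $\overline{\Delta}_{\text{DML,1}}$ and $\overline{\Delta}_{\text{DML,2}}$ in Theorem~\ref{thm:eqneq}.
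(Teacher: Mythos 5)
Your proof is correct and follows exactly the route the paper intends (it defers to the argument in the JML reference): the coordinate identity $|a+b|-|a-b|=2ab$ for $b\in\{0,1\}$, $a\in[0,1]$ gives $\|x+y\|_1-\|x-y\|_1=2\langle x,y\rangle$ on the mixed domains, after which the factor of $2$ cancels uniformly against the denominator, and $x=y=\tfrac12$ witnesses the discrepancy since $\min(a,b)\neq ab$ for soft arguments. Your reading of the second assertion as comparing against the naive inner-product extension of STL is also the intended one.
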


Building upon the Tversky loss, we incorporate a focal term \cite{FocalLossLinTPAMI2018} as adopted in the focal Tversky loss \cite{FocalTverskyLossAbrahamISBI2019}. We call this new loss as the Compatible Focal Tversky Loss (CFTL) and define it as:
\begin{equation}
    \overline{\Delta}_{\text{CFTL}}: x\in[0,1]^p, y\in [0,1]^p \mapsto \overline{\Delta}_{\text{CTL}}^{\gamma}
\end{equation}
where $\gamma$ is the focal term. With $\gamma > 1$, CFTL focuses more on less accurate predictions that have been misclassified. In Figure \ref{fig:cftl}, we plot the loss value of CFTL with $\alpha=0.7, \beta=0.3$ when the soft label is 0.8. With increased $\gamma$, the loss value flattens when the prediction is near the ground-truth and increases more rapidly when the prediction is far away from the soft label. It is worth noting that this focal term can be seamlessly incorporated into JMLs and DMLs as well.

\begin{figure}
\centering
    \includegraphics[width=0.9\textwidth]{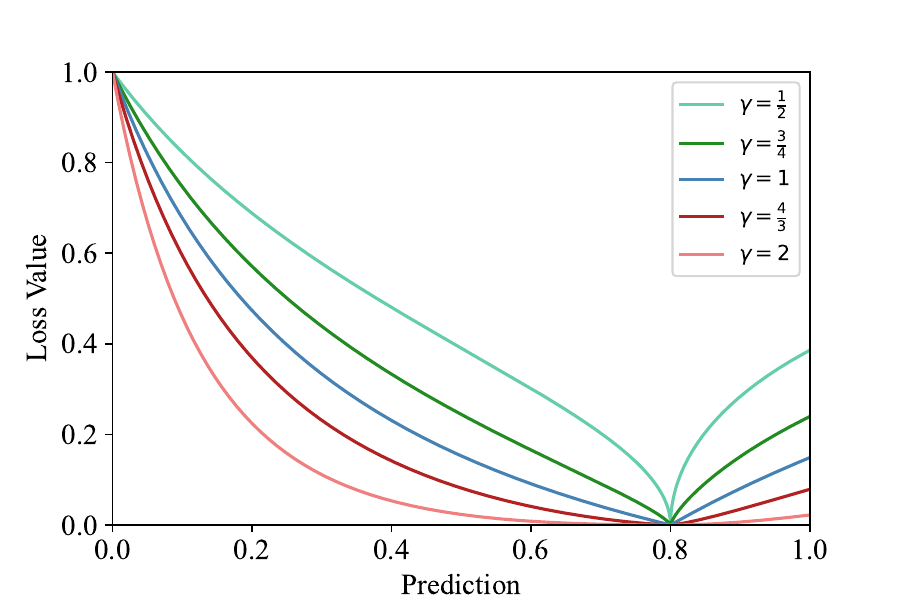}
    \caption{The loss value of CFTL with $\alpha=0.7, \beta=0.3$ when the soft label is 0.8.}
    \label{fig:cftl}
\end{figure}

\section{Calibrated Knowledge Distillation} \label{app:kde}
Wang et al. \cite{JMLsWangNeurIPS2023} empirically demonstrated that a well-calibrated teacher can distill a more accurate student. Concurrently, Menon et al. \cite{AStatisticalMenonICML2021} posited that the effectiveness of KD stems from the teacher's ability to provide an estimation of the Bayes class-probabilities $p^*(y|x)$. According to their argument, such an estimation can reduce the variance in the student's empirical loss. Theorem \ref{thm:ce} implies the bias of the estimation is bounded above by the calibration error. This emphasizes the pivotal role of a teacher's calibration in KD and sheds light on numerous intriguing observations within the field:
\begin{itemize}
    \item Why doesn't a stronger teacher always distill a superior student \cite{DISTHuangNeurIPS2022}? Although a stronger teacher, such as a deeper network, might offer higher accuracy, it could also be less calibrated \cite{CalibrationGuoICML2017}.
    \item Why is it beneficial to smooth a teacher's predictions using temperature scaling \cite{KDHintonNeurIPSWorkshop2015}? Modern neural networks often exhibit overconfidence in their predictions \cite{CalibrationGuoICML2017}. As a remedy, temperature scaling has emerged as a post-hoc calibration method, effectively decreasing a teacher's calibration error \cite{CalibrationGuoICML2017,LTSDingICCV2021,Meta-CalMaICML2021,OnPopordanoskaMICCAI2021,PostRousseauISBI2021}.
    \item Why LS can enhance a model's calibration, but a teacher trained with LS could hurt the student's performance \cite{WhenMullerNeurIPS2019}? We hypothesize that after training a teacher with LS, the optimal temperature in KD should typically be reduced to maintain a low calibration error. Still utilizing a high temperature could result in an excessively smoothed teacher, leading it to become under-confident.
\end{itemize}

Inspired by the fact that we can improve KD by decreasing the teacher's calibration error, we propose the use of $\mathbb{E}[y|f(x)]$ as a distillation signal to supervise the student. Note that by the definition of the calibration error, $\mathbb{E}[y|f(x)]$ represents the optimal recalibration mapping of $f(x)$ that will give zero calibration error. Furthermore, given only access to $f(x)$, predicting $\mathbb{E}[y|f(x)]$ emerges as the Bayesian optimal classifier. 

Nevertheless, a direct computation of $\mathbb{E}[y|f(x)]$ poses challenges since it depends on the unknown data generating distribution. To address this, we adopt a kernel density estimator (KDE) that is proven to be consistent \cite{KDE-XEPopordanoskaNeurIPS2022}:
\begin{equation} \label{eq:kde}
     \mathbb{E}[y|f(x)] \approx \widehat{\mathbb{E}[y|f(x)]} = \frac{\sum_{i=1}^{n}k_{\text{kernel}}(f(x), f(x_i))y_i}{\sum_{i=1}^{n}k_{\text{kernel}}(f(x), f(x_i))}.
\end{equation}
For binary classification, $k_{\text{kernel}}$ is a Beta kernel:
\begin{equation}
    k_{\text{Beta}}(f(x_j),f(x_i)) = f(x_j)^{\alpha_i-1} (1-f(x_j))^{\beta_i-1} \frac{\Gamma(\alpha_i + \beta_i)}{\Gamma(\alpha_i)\Gamma(\beta_i)},
\end{equation}
and for the multiclass setting, $k_{\text{kernel}}$ is a Dirichlet kernel:
\begin{equation}
    k_{\text{Dir}}(f(x_j),f(x_i)) = \frac{\Gamma(\sum_{k=1}^K \alpha_{ik})}{\prod_{k=1}^K \Gamma(\alpha_{ik})} \prod_{k=1}^K f(x_j)_{k}^{\alpha_{ik}-1}
\end{equation}
with $\Gamma(\cdot)$ the gamma function, $\alpha_{i} = \frac{f(x_i)}{h} + 1$ and $\beta_i=\frac{1-f(x_i)}{h} + 1$, where $h \in \mathbb{R}_{>0}$ is a bandwidth parameter that controls smoothness of the kernel density estimation. Like the temperature in temperature scaling, the estimation becomes smoother when we increase $h$.

The computational complexity of KDE is $O(n)$ for a single pixel, leading to an overall complexity of $O(n^2)$. During gradient descent training, KDE is estimated using a mini-batch. Nonetheless, in semantic segmentation, $n=B\times H \times W$, where $B$ is the batch size, $H$ and $W$ denote the height and width of the image, respectively. Given that $n$ can be significantly large, it becomes necessary to manage the computational cost. To this end, for each training data batch, we randomly select $n_{\text{key}}$ key points such that $n_{\text{key}} \ll n$. Consequently, for each pixel $x$, we only perform the kernel computation with respect to these key points. As a result, the complexity is considerably reduced from $O(n^2)$ to $O(n\times n_{\text{key}})$, incurring only a marginal additional computational cost.

Another inherent challenge in semantic segmentation is that labels are usually highly unbalanced. When sampling key points, there is a risk that certain classes might be overlooked. Consequently, during kernel computation, predictions corresponding to these missing classes will become 0. To counter this, if a training data batch contains $n_{\text{unique}}$ unique classes, we sample $\lceil \frac{n_{\text{key}}}{n_{\text{unique}}} \rceil$ pixels for each unique class. Moreover, since medical images predominantly consist of background pixels, we empirically find that it suffices to apply KDE only to misclassified pixels as well as pixels that are near the boundary.

Through the kernel computation process, the information from the selected key points is broadcasted to all pixels. We leverage both the labels and predictions of these key points to refine the predictions of each individual pixel. In Table \ref{tb:kdloss}, we conduct an ablation study to assess the impact of KDE. Meanwhile, Table \ref{tb:bandwidth} illustrates the influence of bandwidth on LiTS and KiTS using a UNet-ResNet18 student. We notice the performance of KDE exhibits noticeable sensitivity to the bandwidth parameter, but the optimal value is consistent across the two datasets.

\end{document}